\newcounter{corrfn}
\def\corrauthor{%
  \ifnum\value{corrfn}=0%
    \footnote{Corresponding author.}%
    \setcounter{corrfn}{\value{footnote}}%
  \else%
    \footnotemark[\value{corrfn}]%
  \fi%
}%
\title{Learning Neural Operators from Partial Observations \\ 
        via Latent Autoregressive Modeling}
\author{
    Jingren Hou\equalcontrib\textsuperscript{\rm 1 \rm 3},
    Hong Wang\equalcontrib\textsuperscript{\rm 2},
    Pengyu Xu\corrauthor\textsuperscript{\rm 1 \rm 3},
    Chang Gao\textsuperscript{\rm 1 \rm 3},
    Huafeng Liu\textsuperscript{\rm 1 \rm 3},
    Liping Jing\corrauthor\textsuperscript{\rm 1 \rm 3}
}
\newcommand{\ours}{\textsc{LANO}\xspace}
\theoremstyle{plain}
\newtheorem{theorem}{Theorem}[section]
\theoremstyle{definition}
\theoremstyle{remark}
\begin{document}

    \maketitle
    

\begin{abstract}
Real-world scientific applications frequently encounter incomplete observational data due to sensor limitations, geographic constraints, or measurement costs. 
Although neural operators significantly advanced PDE solving in terms of computational efficiency and accuracy, their underlying assumption of fully-observed spatial inputs severely restricts applicability in real-world applications.
We introduce the first systematic framework for learning neural operators from partial observation.
We identify and formalize two fundamental obstacles: (i) the supervision gap in unobserved regions that prevents effective learning of physical correlations, and (ii) the dynamic spatial mismatch between incomplete inputs and complete solution fields.  
Specifically, our proposed Latent Autoregressive Neural Operator~(\ours) introduces two novel components designed explicitly to address the core difficulties of partial observations:
(i) a mask-to-predict training strategy that creates artificial supervision by strategically masking observed regions, and (ii) a Physics-Aware Latent Propagator that reconstructs solutions through boundary-first autoregressive generation in latent space.
Additionally, we develop POBench-PDE, a dedicated and comprehensive benchmark designed specifically for evaluating neural operators under partial observation conditions across three PDE-governed tasks.
\ours achieves state-of-the-art performance with 
18--69$\%$ relative L2 error reduction 
across all benchmarks under patch-wise missingness with 
less than 50$\%$ missing rate,
including real-world climate prediction. 
Our approach effectively addresses practical scenarios 
involving up to 75$\%$ missing rate,
to some extent bridging the existing gap between idealized research settings and the complexities of real-world scientific computing.
\end{abstract}

\begin{links}
\link{Code}{https://github.com/Kingyum-Hou/LANO}
\end{links}





\section{Introduction}
Neural operators provide a promising data-driven alternative for solving partial differential equations (PDEs) in science and engineering~\cite{zachmanoglou1986introduction}. 
They learn to approximate the input-output mappings of PDE-governed tasks from data during training and then infer the PDE solutions for accurate simulations~\cite{karniadakis2021physics,li2020fourier, gao_pinn}. Compared to traditional PDE solvers, neural operators can infer solutions over five orders of magnitude faster during inference~\cite{azizzadenesheli2024neural, wangaccelerating, dong2024accelerating, wang2025symmap}.
Beyond computational efficiency, neural operators successfully model a wide range of physical phenomena governed by underlying PDEs, including airflow, weather patterns, and optical systems~\cite{li2023fourier,gupta2022towards,brandstetter2022clifford, luo2024neural, huang2025self, wang2025stnet, lv2025exploiting, wei2025mecot, wei2025vflow}. 


\begin{figure}[!t]
    \centering
    \begin{minipage}{0.99\linewidth}
    \centerline{\includegraphics[width=\textwidth]{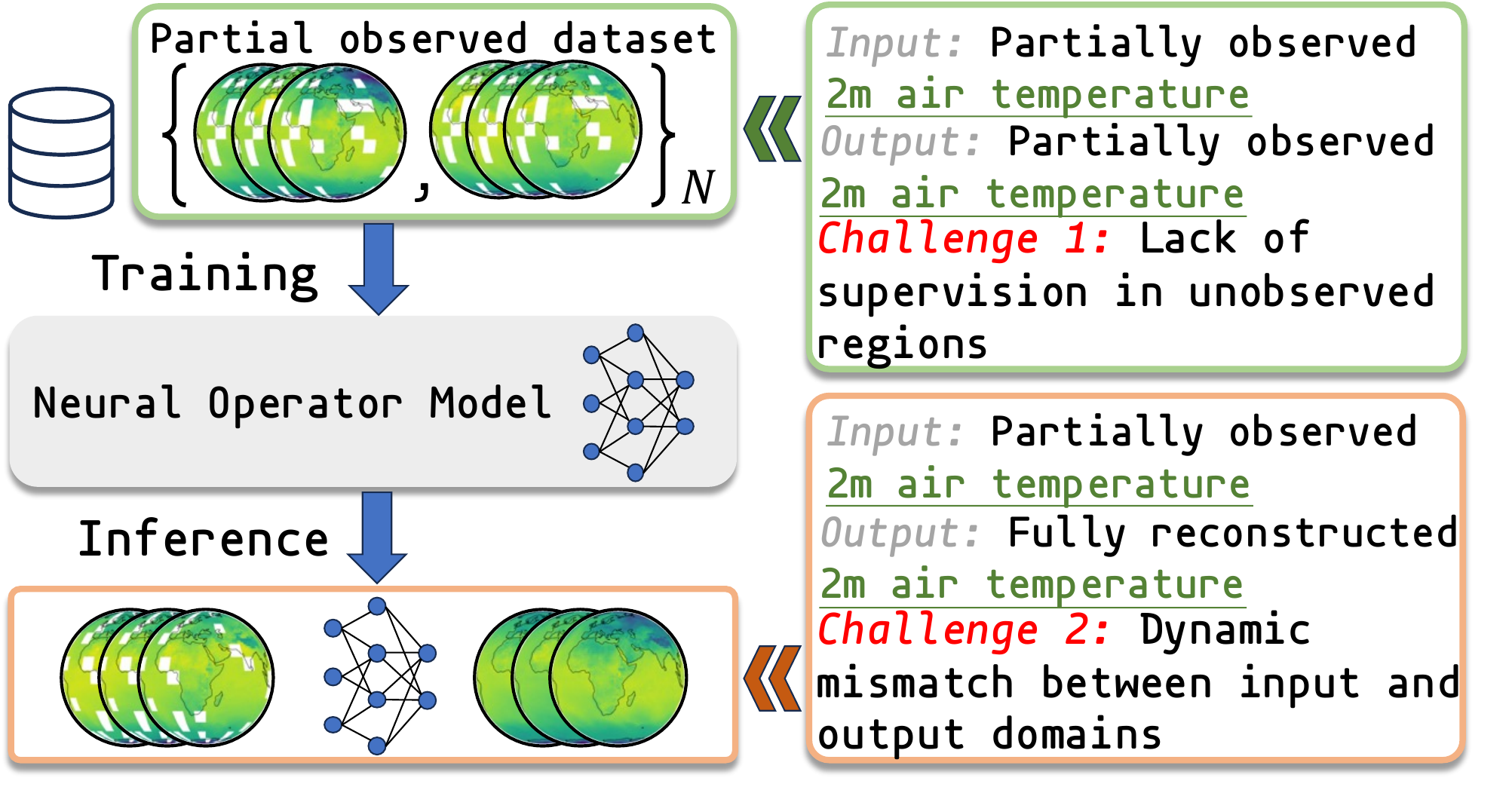}}
    \end{minipage}
    \caption{An overview of neural operator learning under partially observed PDE datasets. Once trained, the model is expected to infer solutions from unseen, partially observed inputs. However, two challenges arise in this setting, as summarized on the right.}
    \label{fig:intro}
\end{figure} 

A major obstacle for neural operators is their dependency on high-quality and fully observed training datasets~\cite{kovachki2023neural}.
However, in many real-world applications across the natural sciences and engineering, data are often partially observed—for instance, weather records miss large geographic areas where stations cannot be installed~\cite{morice2021updated}. 
Similar gaps arise in seismic exploration, electromagnetic surveys, and medical imaging~\cite{zheng2019applications,puzyrev2019deep,feng2021task} due to terrain blockage, sparse sampling, or sensor failure.
\textbf{Crucially, each training sample normally consists of an input field and its corresponding solution field on a fixed grid; Partial observation removes spatially correlated unobserved regions from both.}
These extensive and irregular unobserved regions make interpolation-based data completion unreliable for training neural operators. 

This partial observability poses two challenges for operators learning (Fig.~\ref{fig:intro}):
(i) Lack of supervision in unobserved regions:
In partial observations, the solution fields are only partially available.
The absence of ground truth in unobserved regions hinders the model from learning intricate physical correlations of PDEs ~\cite{wu2024transolver}.
(ii) Dynamic mismatch between partial input and global output domains: Unlike classical neural operator settings where inputs and outputs are defined over the same spatial grid~\cite{brandstettermessage2022,wu2024transolver}, partially observed data results in mismatched and dynamically varying input regions.
The model must infer full-domain outputs from partially observed inputs, which most existing architectures are not designed to handle~\cite{gao2025discretizationinvariance}.

\paragraph{Our approach} 
To address these challenges, we propose a hybrid solution that combines a novel masking-based training strategy with a physics-aware autoregressive modeling framework, and we investigate: \textit{can masked prediction tasks within observed regions implicitly guide operator networks to accurately infer unobserved regions?}.

Firstly, we introduce a mask-to-predict training strategy~(MPT). Inspired by the masking strategy in natural language processing~(NLP) and computer vision~(CV), which is conceptually simple: it removes a portion of the data and learns to predict the removed content~\cite{devlin2019bert, he2022masked}, MPT artificially masks parts of the already observed input regions during supervised training.
This creates pseudo-missing regions with available supervision, encouraging the model to extrapolate from partial context.
Instead of pre-training on unlabeled data of neural operators~\cite{chen2024data}, MPT is directly integrated into the training process, enabling better generalization to real unobserved regions.

Secondly, we propose the latent autoregressive neural operator~(LANO). Directly predicting all unobserved regions simultaneously often results in blurry and incoherent solutions due to the inherent difficulty of modeling complex joint distributions over spatially dependent points. 
This issue has been similarly noted in visual content generation, where autoregressive or parallelized prediction strategies have effectively mitigated global modeling difficulties by explicitly handling local dependencies~\cite{van2016pixel, wang2025parallelized}.
Motivated by this observation, LANO employs an autoregressive generation strategy within a latent representation space, progressively reconstructing unobserved regions in a spatially structured manner.
Central to this design is the Physics-aware Latent Propagator (PhLP), which decomposes solution prediction into a structured multi-step process. Starting from observed boundary conditions, PhLP incrementally propagates latent information into unobserved regions. This boundary-first design reflects the physical structure of many PDEs and enforces physical consistency in predictions.

To our knowledge, this is the first neural operator framework that enables effective training from partially observed data with dynamic and unaligned input–output domains.
Our main contributions are summarized as follows:

\begin{itemize}
   \item We identify and formalize two key challenges in operator learning under partial observation: (i) the lack of supervision in unobserved regions, and (ii) the dynamic mismatch between partial input and global output domains
   \item We propose a hybrid framework to address these challenges: (i) the Mask-to-Predict (MPT) training strategy for robust learning from partial observation, and (ii) the Latent Autoregressive Neural Operator (\ours), which uses a physics-aware latent propagator (PhLP) to progressively reconstruct solutions across unobserved regions.
   \item We construct POBench-PDE, the first benchmark suite for operator learning under partial observation with six sets of comprehensive experiments across three PDE-governed tasks, and demonstrate that under patch-wise missingness with less than $50\%$ missing rate, our model consistently outperforms existing methods by $17.8\%$-$68.7\%$ in relative error reduction on both synthetic and real-world PDE-governed tasks, including weather forecasting.
\end{itemize}



\section{Related Work}
\label{sec:related}

\paragraph{Neural Operator Learning.}  
This field aims to approximate PDE input-output mappings through deep models. DeepONet~\cite{lu2021learning} pioneered this direction, followed by Fourier Neural Operator (FNO)~\cite{li2020fourier} and its variants~\cite{li2024physics, tran2023factorized, wen2022u}. To handle irregular meshes, Geo-FNO~\cite{li2023fourier} introduces coordinate mappings for non-uniform domains.

Transformer-based approaches have gained prominence, including GK-Transformer~\cite{Cao2021transformer}, OFormer~\cite{li2022transformer}, GNOT~\cite{hao2023gnot}, ONO~\cite{xiao2023improved}, and MoE-POT~\cite{wangmixture}. Recent methods explore latent space modeling: Transolver~\cite{wu2024transolver} uses Physics-Attention for geometric features, LSM~\cite{wu2023solving} operates in hierarchical latent space, while IPOT~\cite{lee2024inducing}, LNO~\cite{wang2024latent}, and UPT~\cite{alkin2024universal} decouple input-output sampling locations.

However, these methods assume shared input-output discretization or produce one-shot full-domain predictions, limiting their effectiveness under partial observation. In contrast, \ours enables progressive information propagation from observed to unobserved regions through boundary-first latent evolution.

\paragraph{Training under Sparse Observation.} 
Some transformer-based methods (OFormer, IPOT, GNOT, LNO) handle partial inputs during inference but require fully observed training, limiting real-world application. DINo~\cite{sitzmann2020implicit} and CORAL~\cite{serrano2023operator} employ implicit neural representations for continuous prediction from sparse signals.

These approaches either rely on implicit modeling or lack systematic training under dynamic sparse patterns, focusing primarily on sparse point-wise missingness rather than spatially correlated unobserved regions.


\begin{figure*}[t]
    \centerline{\includegraphics[width=0.99\linewidth]{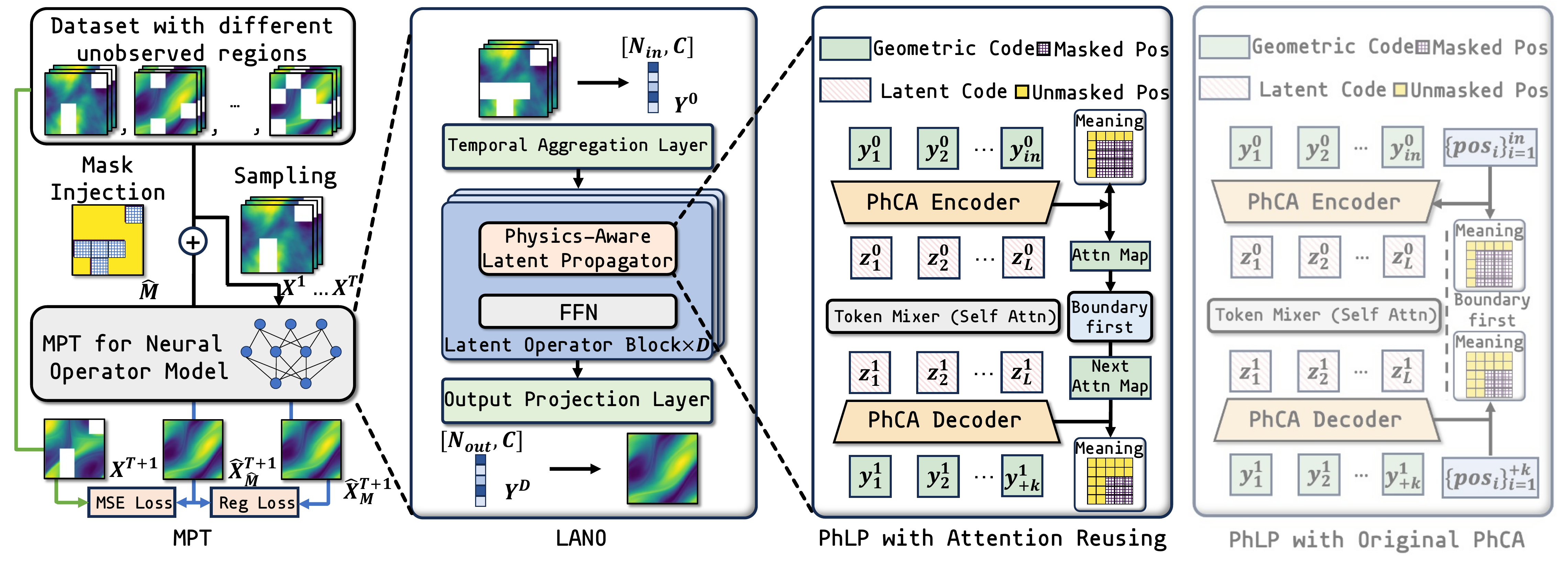}}
    \caption{\textbf{Overview of the proposed \ours model architecture.} We begin by sampling trajectories from partial observations. The model is optimized using the mask-to-predict training strategy~(MPT), where artificially masked past frames are used to predict the next frame. To support this learning objective, we design a novel model architecture incorporating a physics-aware latent propagator~(PhLP), which implements a boundary-first autoregressive framework. 
    }
    \label{fig:model}
\end{figure*}
\section{Proposed Method}
\label{sec:model}


\subsection{Overview of \ours}
\paragraph{Problem setup.} 
We consider time-dependent PDEs defined on a spatial domain $\Omega$, discretized into $N$ points $\mathcal{D} \subset \mathbb{R}^{N \times d}$, where $d$ is the spatial dimensionality.
For a single trajectory, we denote the input and output (solution) measurements as $\boldsymbol{x}_{\theta}(\mathcal{D}_{in})$ and $\boldsymbol{x}_{\theta}(\mathcal{D}_{out})$, where $\mathcal{D}_{in}, \mathcal{D}_{out} \subset \mathcal{D}$ and the data lie in $\mathbb{R}^{N_{in} \times d}$ and $\mathbb{R}^{N_{out} \times d}$, respectively.
In many real-world settings (e.g., weather forecasting), only trajectories of observations are available, while the underlying PDE parameters $\theta$ remain unknown.
In this case, to predict the next timestep, the model must implicitly infer the most likely $\theta$ from adjacent $T$ frames $\{{\boldsymbol{x}^i}\}_{i=1}^T$~\cite{hao2024dpot}.
Our goal is to predict the solution at the next timestep given historical observations. For instance, in fluid dynamics, the input is a sequence of past velocity fields over $\mathcal{D}_{in}$, and the target is the full velocity field over $\mathcal{D}$.
We also define a binary mask $\boldsymbol{M} \in \{0, 1\}^{N}$ to indicate unobserved regions: $M_i = 1$ if position $i$ is observed, and $M_i = 0$ otherwise. 

\paragraph{Objective.} We aim to learn a neural operator $\mathcal{G}_w\left(\boldsymbol{x}^{1:T}\right) = \boldsymbol{x}^{T+1}$ from partially observed temporal PDE datasets, where $w$ denotes learnable parameters. The model auto-regressively takes $T$ frames as input and predicts the next frame. 
We directly supervise the one-step prediction loss using input-output pairs $\{\{\boldsymbol{x}^i\}_{i=1}^{T}(\mathcal{D}_o), \boldsymbol{x}^{T+1}(\mathcal{D}_o)\}_{\# \text{trajectory}}$, where $\mathcal{D}_O \subset \mathcal{D}$ denotes the observed regions with $N_O<N$ points, and each trajectory has a different $\mathcal{D}_o$,
\begin{equation}
    \scalebox{0.98}{$
        \displaystyle
        \min _w \mathcal{L}=\mathbb{E}_{\boldsymbol{x}} \sum_{T \leqslant i < T_{all}}\left\|\mathcal{G}_w\left(\boldsymbol{x}^{i-T+1:i}*M\right)-\boldsymbol{x}^{i+1}\right\|_2^2.
    $}
\end{equation}
We propose a tailored training strategy to address the lack of supervision in unobserved regions. It injects artificial masks $\hat{M}$ as structured noise during training, which helps mitigate cumulative error propagation in autoregressive neural PDE solvers~\cite{brandstettermessage2022} and has been shown to effectively reduce long-term error accumulation in neural operators~\cite{hao2024dpot}.

\subsection{Mask-to-predict training strategy}
To address the lack of supervision in unobserved regions, we propose a mask-to-predict training strategy (MPT) that integrates random input masking into neural operator training. Motivated by the observation that PDE dynamics are often invariant to sparsely sensed fields~\cite{chen2024data}, MPT adapts masking-based learning from NLP and CV~\cite{devlin2019bert, he2022masked} by injecting artificial masks into already incomplete inputs while retaining supervision over observed outputs.

Concretely, we apply a stochastic mask $\hat{M}$ to further occlude input fields, training the model to recover solutions from limited context. This encourages extrapolation beyond available measurements and improves generalization to real-world unobserved regions. We additionally introduce consistency regularization that enforces prediction alignment between original and masked inputs, promoting invariance to artificial sparsity. More details are provided in the Appendix~A.1.



\subsection{Boundary-first autoregressive framework}
Directly predicting all unobserved fields at once often results in blurry and inconsistent outputs, due to the complexity of spatial joint distributions. To mitigate this, we draw inspiration from parallel autoregressive paradigms in visual generation~\cite{wang2025parallelized}, which model local dependencies progressively to improve stability and structure.

Given that PDE solutions are highly sensitive to boundary conditions, we adopt a boundary-first autoregressive framework that propagates information outward from known regions. To enhance efficiency, generation is performed in latent space, which improves both computational cost and spatial consistency. This design allows the model to generate solutions in a physics-aware and coherent manner across partially observed domains.

\subsection{Model Architecture}
\paragraph{Overview.} Existing neural operator architectures lack explicit positional guidance during feature extraction, limiting their predictive ability until the final projection stage. Inspired by LNO~\cite{wang2024latent}, we propose a boundary-first autoregressive framework based on Physics-Cross-Attention~(PhCA). This design introduces spatially aware information during feature propagation, enabling progressive prediction with enhanced physical and positional consistency.

\paragraph{Architecture.} The proposed latent autoregressive neural operator~(\ours) consists of three modules: a temporal aggregation layer for input embedding, latent operator layers that progressively propagate information from observed to unobserved regions guided by PhCA, and an output projection layer for geometric space recovery. The overall architecture is shown in Figure~\ref{fig:model}.


\paragraph{Temporal Aggregation Layer.} This layer embeds input positions and physical measurements from $T$ frames into deep features.
Given positions set $\mathcal{D}=\{s_i\}_{i=1}^{N_{in}}$ with geometric information of $N_{in}$ spatial points and observed fields $\{\boldsymbol{x}^{i}\}_{i=1}^{T}$, we embed them into features $\boldsymbol{Y}^{0}=\{\boldsymbol{y}_i\}_{i=1}^{N_{in}}$ by a linear layer, where $\boldsymbol{y}_i \in \mathbb{R}^{1 \times C}$. The embedding is given by
\begin{equation}
    \boldsymbol{Y}^0=\operatorname{Linear}(\operatorname{Concat}(\mathcal{D}, \boldsymbol{x}^1,\boldsymbol{x}^2,…\boldsymbol{x}^T)).
\end{equation}

\paragraph{Latent Operator Layer.} This layer models PDE operators in the latent space to propagate features from input $\boldsymbol{Y}^0$ to output $\boldsymbol{Y}^D$ using stacked latent operator blocks with a physics-aware latent propagator~(PhLP), 
\begin{equation}
    \begin{aligned}
    & \hat{\boldsymbol{Y}}^l=\text { PhLP }\left(\text { LayerNorm }\left(\boldsymbol{Y}^{l-1}\right)\right)+\boldsymbol{Y}^{l-1} \\
    & \boldsymbol{Y}^l=\text { MLP }\left(\text { LayerNorm }\left(\hat{\boldsymbol{Y}}^l\right)\right)+\hat{\boldsymbol{Y}}^l.
    \end{aligned}
\end{equation}

\paragraph{Output Projection Layer} adopts a linear projection upon the last deep feature $\boldsymbol{Y}^{D}$ and obtains the output as a prediction of solution fields $\boldsymbol{x}^{T+1}$, the output can be calculated through $\boldsymbol{x}^{T+1}=\operatorname{Linear}(\boldsymbol{Y}^D)$.

\begin{table*}[!t]
    \centering
    \setlength{\tabcolsep}{3.8pt}
    \renewcommand{\arraystretch}{0.65}
    \footnotesize
    \textsc{
        \begin{tabular}{ l | c c | c c | c c | c c | c c | c c }
            \toprule
            \multirow{2}{*}{Model} & \multicolumn{4}{c}{ Navier-Stokes } & \multicolumn{4}{c}{ Diffusion-Reaction } & \multicolumn{4}{c}{ ERA5 } \\
            & \multicolumn{2}{c}{point-wise} & \multicolumn{2}{c}{patch-wise} 
            & \multicolumn{2}{c}{point-wise} & \multicolumn{2}{c}{patch-wise} 
            & \multicolumn{2}{c}{point-wise} & \multicolumn{2}{c}{patch-wise} \\
            \midrule
            Train
            & \multicolumn{12}{c}{ Missing ratio $s = 5\%$ } \\
            Test
            & 5\% & 25\% & 5\% & 25\%
            & 5\% & 25\% & 5\% & 25\%
            & 5\% & 25\% & 5\% & 25\% \\
            \midrule
            MIONet~\citeyearpar{jin2022mionet} \cellcolor{white}
            & 0.5662 & / & 0.6531 & /
            & 0.9244 & / & 0.9196 & / 
            & 0.0487 & / & 0.0532 & / \\  
            OFormer~\citeyearpar{li2022transformer}
            & 0.2020 & 0.2082 
            & 0.2021 & 0.2090
            & 0.0301 & 0.0380 
            & 0.0334 & 0.0540 
            & 0.0332 & 0.0334     
            & 0.0289 & 0.0290 \\
            CORAL~\citeyearpar{serrano2023operator}
            & 0.2320 & 0.2510 
            & 0.2479 & 0.2515
            & 0.4758 & / 
            & 0.4916 & /
            & /      & / 
            & /      & /    \\
            GNOT~\citeyearpar{hao2023gnot}
            & 0.2311 & 0.2450
            & 0.2574 & 0.2733
            & 0.9166 & 0.9166
            & 0.9235 & 0.9236
            & 0.0256 & 0.0256
            & 0.0255 & 0.0256 \\
            IPOT~\citeyearpar{lee2024inducing} \cellcolor{white}
            & 0.2528 & 0.2526 & 0.2539 & 0.2556
            & \underline{0.0230} & \underline{0.0284} 
            & \underline{0.0319} & \underline{0.0527}
            & 0.0453             & 0.0458                 
            & 0.0453             & 0.0453   \\
            LNO~\citeyearpar{wang2024latent}
            & \underline{0.1687} & \underline{0.1745}
            & \underline{0.1798} & \underline{0.1879}
            & / & / 
            & / & /
            & \underline{0.0212} & \underline{0.0213}
            & \underline{0.0217} & \underline{0.0216} \\
            \ours-S(Ours)
            & 0.1649 & 0.1645
            & 0.1621 & 0.1694
            & 0.2081 & 0.4596
            & 0.2344 & 0.4719
            & 0.0168 & 0.0170
            & 0.0157 & 0.0154 \\
            \ours(Ours)
            & \textbf{0.1268} & \textbf{0.1275} 
            & \textbf{0.1244} & \textbf{0.1310}
            & \textbf{0.0080} & \textbf{0.0089} 
            & \textbf{0.0148} & \textbf{0.0281}
            & \textbf{0.0122} & \textbf{0.0123}    
            & \textbf{0.0118} & \textbf{0.0120} \\
            \midrule
            \textit{Promotion} 
            & 24.8$\%$ & 26.9$\%$ 
            & 30.8$\%$ & 30.3$\%$
            & 65.2$\%$ & 68.7$\%$ 
            & 53.6$\%$ & 46.7$\%$ 
            & 42.5$\%$ & 42.3$\%$         
            & 45.6$\%$ & 44.4$\%$ \\
            \midrule
            Train
            & \multicolumn{12}{c}{ Missing ratio $s = 25\%$ } \\
            Test
            & 25\% & 50\% & 25\% & 50\%
            & 25\% & 50\% & 25\% & 50\% 
            & 25\% & 50\% & 25\% & 50\% \\
            \midrule
            MIONet~\citeyearpar{jin2022mionet}
            & 0.5703 & / 
            & 0.7312 & /
            & 0.9214 & / 
            & 0.9262 & / 
            & 0.0309 & /  
            & 0.0460 & / \\   
            OFormer~\citeyearpar{li2022transformer}
            & 0.2079 & 0.2159
            & 0.2083 & 0.2226
            & 0.0457 & 0.0670 
            & 0.0818 & 0.1569 
            & 0.0299 & 0.0302
            & 0.0284 & 0.0288 \\
            CORAL~\citeyearpar{serrano2023operator} \cellcolor{white}
            & 0.2264 & 0.2322 
            & 0.2480 & 0.2640
            & 0.4796 & /
            & 0.5366 & / 
            & /      & /
            & /      & /       \\
            GNOT~\citeyearpar{hao2023gnot}
            & 0.2369 & 0.2527
            & 0.2734 & 0.3009
            & 0.9186 & 0.9186
            & 0.9249 & 0.9255
            & 0.0257 & 0.0257
            & 0.0258 & 0.0258 \\
            IPOT~\citeyearpar{lee2024inducing}
            & 0.2568 & 0.2608 
            & 0.2556 & 0.2625
            & \underline{0.0262} & \underline{0.0364} 
            & \underline{0.0485} & \underline{0.1181} 
            & 0.0452             & 0.0455  
            & 0.0453             & 0.0453  \\
            LNO~\citeyearpar{wang2024latent}
            & \underline{0.1732} & \underline{0.1797}
            & \underline{0.1915} & \underline{0.2110}
            & / & /
            & / & /
            & \underline{0.0213} & \underline{0.0214}
            & \underline{0.0214} & \underline{0.0224} \\
            \ours-S(Ours)
            & 0.1632 & 0.1687
            & 0.1749 & 0.1935
            & 0.4600 & 0.6504
            & 0.4718 & 0.6931
            & 0.0164 & 0.0164/
            & 0.0163 & 0.0167 \\
            \ours(Ours) \cellcolor{white}
            & \textbf{0.1274} & \textbf{0.1310} 
            & \textbf{0.1435} & \textbf{0.1608}
            & \textbf{0.0092} & \textbf{0.0127} 
            & \textbf{0.0275} & \textbf{0.0756}
            & \textbf{0.0121} & \textbf{0.0124}
            & \textbf{0.0120} & \textbf{0.0124} \\
            \midrule
            \textit{Promotion}
            & 26.4$\%$ & 27.1$\%$ 
            & 25.1$\%$ & 23.8$\%$
            & 64.9$\%$ & 65.1$\%$ 
            & 43.3$\%$ & 36.0$\%$ 
            & 43.2$\%$ & 42.1$\%$         
            & 43.9$\%$ & 44.6$\%$ \\
            \midrule
            Train
            & \multicolumn{12}{c}{ Missing ratio $s = 50\%$ } \\
            Test
            & 50\% & 75\% 
            & 50\% & 75\%
            & 50\% & 75\% 
            & 50\% & 75\% 
            & 50\% & 75\% 
            & 50\% & 75\% \\
            \midrule
            MIONet~\citeyearpar{jin2022mionet}
            & 0.5754 & / 
            & 0.8382 & /
            & 0.9651 & / 
            & 0.9264 & / 
            & 0.0402 & /  
            & 0.0684 & / \\  
            OFormer~\citeyearpar{li2022transformer}
            & 0.2151 & 0.2218
            & 0.2348 & \underline{0.2597}
            & 0.0972 & 0.1515 
            & 0.2843 & 0.4893 
            & 0.0283 & 0.0288       
            & 0.0273 & 0.0274 \\
            CORAL~\citeyearpar{serrano2023operator}
            & 0.2396 & 0.2662 
            & 0.2835 & 0.3414
            & 0.4909 & / 
            & 0.6225 & / 
            & /      & /   
            & /      & /     \\
            GNOT~\citeyearpar{hao2023gnot}
            & 0.2807 & 0.3051
            & 0.3210 & 0.3609
            & 0.9138 & 0.9145
            & 0.9267 & 0.9275
            & 0.0256 & 0.0258
            & 0.0320 & 0.0344 \\
            IPOT~\citeyearpar{lee2024inducing}
            & 0.2594 & 0.2645 
            & 0.2731 & 0.2878
            & \underline{0.0405} & \underline{0.0678} 
            & \underline{0.1212} & \underline{0.3321} 
            & 0.0437             & 0.0438
            & 0.0392             & 0.0389  \\
            LNO~\citeyearpar{wang2024latent}
            & \underline{0.1749} & \underline{0.1921}
            & \underline{0.2320} & 0.3434
            & / & /
            & / & /
            & \underline{0.0219} & \underline{0.0220}
            & \underline{0.0223} & \underline{0.0276} \\
            \ours-S(Ours)
            & 0.1601 & 0.1775 
            & 0.2397 & 0.3236
            & 0.6583 & 0.8023
            & 0.6526 & 0.7995
            & 0.0158 & 0.0164
            & 0.0168 & 0.0177 \\
            \ours(Ours)
            & \textbf{0.1437} & \textbf{0.1571} 
            & \textbf{0.1835} & \textbf{0.2405}
            & \textbf{0.0128} & \textbf{0.0498} 
            & \textbf{0.0934} & \textbf{0.3249} 
            & \textbf{0.0120} & \textbf{0.0125}
            & \textbf{0.0121} & \textbf{0.0133}   \\
            \midrule
            \textit{Promotion}
            & 17.8$\%$ & 18.2$\%$ & 20.9$\%$ & 7.4$\%$
            & 68.4$\%$ & 26.5$\%$ & 22.9$\%$ & 2.2$\%$ 
            & 45.2$\%$ & 43.2$\%$ & 45.7$\%$ & 51.8$\%$ \\
            \midrule
        \end{tabular}
    }
    \caption{
         Performance comparison on POBench-PDE under various task settings. Relative L2 error is reported (lower is better). The best results are highlighted in bold, and the second best are underlined. Promotion refers to the relative error reduction w.r.t. the second best model ($1-\frac{\text { Our error }}{\text { The second best error }})$. ``/'' denotes that the baseline model is not applicable.
    }
    \label{table:main_result}
\end{table*}

\subsection{Physics-Aware Latent Propagator}
To handle feature interactions across partially observed inputs, we introduce the Physics-Aware Latent Propagator (PhLP), which employs a boundary-first autoregressive framework to progressively reconstruct unobserved regions through physics-guided information propagation.

\paragraph{Physics-Cross-Attention Mechanism.} 
PhLP operates through a Physics-Cross-Attention (PhCA) encoder-decoder in latent space. Given features $\boldsymbol{Y}^{l-1} \in \mathbb{R}^{N_{in} \times C}$ and observation mask $\boldsymbol{M} \in \{0,1\}^{N_{in}}$, the encoder generates latent tokens $\boldsymbol{Z} \in \mathbb{R}^{H \times L \times C_h}$ through:

\begin{equation}
\begin{aligned}
\boldsymbol{S} &= \text{softmax}\left(\frac{\text{MLP}(\boldsymbol{Y}_h)}{\tau}\right) \odot \boldsymbol{M} \\
\boldsymbol{Z} &= \frac{\boldsymbol{S}^T \boldsymbol{Y}_h}{\|\boldsymbol{S}\|_1 + \epsilon},
\end{aligned}
\end{equation}
where $\boldsymbol{S} \in \mathbb{R}^{H \times N_{in} \times L}$ denotes attention maps aggregating spatial information into $L$ latent tokens and $\boldsymbol{Y}_h \in \mathbb{R}^{H \times N_{in} \times C_h}$ is obtained by grouping the input features $\boldsymbol{Y}^{l-1}$ along the channel dimension into $H$ heads, each of dimension $C_h = C / H$. Notably, we directly utilize deep features as keys since they inherently encode spatial position information, eliminating the need for explicit positional encodings.

\paragraph{Boundary-First Propagation.} 
PhLP employs partial convolution~(PConv)~\cite{liu2018image} to propagate information from observed to unobserved regions:
\begin{equation}
\boldsymbol{S}^{next}, \boldsymbol{M}^{next} = \text{PConv}(\boldsymbol{S}, \boldsymbol{M}).
\end{equation}

The updated tokens are processed through a self-attention before decoding:
\begin{equation}
\text { PhLP }(\boldsymbol{Y}^l) = \text{PhCA-Decoder}(\text{Attn}(\boldsymbol{Z}), \boldsymbol{S}^{next}).
\end{equation}
For further details on PhLP, please refer to the Appendix~A.2.

\paragraph{Theoretical Foundation.} 
The solution of PDEs can be formulated as an iterative update process, with existing neural operator approaches leveraging Monte-Carlo sampling to approximate integral operators over the spatial domain $\mathcal{D}$ for each update step (Li et al., 2020; Kovachki et al., 2023). To establish theoretical grounding for our method, we demonstrate that PhLP maintains equivalence to learnable integral operators defined on $\mathcal{D}$.

\begin{theorem}[PhLP as an Integral Operator with Self-Update (Simplified)]
\label{thm:phlp-io}
Let $\mathcal{D} \subset \mathbb{R}^d$ be the computational domain and $\mathcal{D}_{o}^l \subseteq \mathcal{D}$ the observed region at layer $l$. Given features $\mathbf{Y}^{l-1}: \mathcal{D} \to \mathbb{R}^C$, PhLP is equivalent to applying the learnable integral operator
\begin{equation}
\label{eq:phlp-io}
\mathcal{G}^l(\mathbf{Y}^{l-1})(\mathbf{x}^*) = \int_{\mathcal{D}_{o}^l} \kappa^l(\mathbf{x}^*, \boldsymbol{\xi}) \mathbf{Y}^{l-1}(\boldsymbol{\xi}) d\boldsymbol{\xi}, \quad \forall \mathbf{x}^* \in \mathcal{D}
\end{equation}
where the kernel $\kappa^l(\mathbf{x}^*, \boldsymbol{\xi}) \approx \sum_{h=1}^H \sum_{k=1}^L \phi^l_{hk}(\mathbf{x}^*) \psi^l_{hk}(\boldsymbol{\xi})$ admits a low-rank factorization. For observed points $\mathbf{x}^* \in \mathcal{D}_{o}^l$, the kernel includes an identity contribution enabling residual self-update.
\end{theorem}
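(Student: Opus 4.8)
The plan is to unfold the discrete PhLP pipeline stage by stage, recognize each stage as a Monte-Carlo quadrature rule on $\mathcal{D}$ in the spirit of the integral-operator view of neural operators (Li et al., 2020; Kovachki et al., 2023), and then collapse the composition into a single kernel integral whose kernel is visibly a sum of $HL$ rank-one terms. Throughout, let $\{\boldsymbol{\xi}_i\}_{i=1}^{N_{in}}\subset\mathcal{D}$ be the discretization underlying $\mathbf{Y}^{l-1}$, with the uniform quadrature weights implicit in the $\ell^1$ normalization, and $\mathcal{D}_{o}^l=\{\boldsymbol{\xi}_i: M_i=1\}$. First I would handle the encoder: for head $h$ and latent token $k$, write $S_{h,i,k}=\sigma_{h,i,k}M_i$ with $\sigma_{h,i,k}=\mathrm{softmax}_k\!\big(\mathrm{MLP}(\mathbf{Y}_h)_i/\tau\big)$ and set the normalized factor $\psi_{hk}(\boldsymbol{\xi}_i):=S_{h,i,k}/(\sum_j S_{h,j,k}+\epsilon)$, supported on $\mathcal{D}_{o}^l$; then $Z_{hk}=\sum_i \psi_{hk}(\boldsymbol{\xi}_i)\,\mathbf{Y}^{l-1}_h(\boldsymbol{\xi}_i)$ is the quadrature estimate of $\int_{\mathcal{D}_o^l}\psi_{hk}(\boldsymbol{\xi})\,\mathbf{Y}^{l-1}(\boldsymbol{\xi})\,d\boldsymbol{\xi}$ on head $h$'s channel block. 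The step $(\boldsymbol{S}^{next},\boldsymbol{M}^{next})=\mathrm{PConv}(\boldsymbol{S},\boldsymbol{M})$ is deterministic given $(\boldsymbol{S},\boldsymbol{M})$ and only reshapes the weight/mask tensors used downstream, leaving the integration domain as $\mathcal{D}_o^l$.

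Next I would absorb the token mixing and the decoder. The self-attention $\tilde{\boldsymbol{Z}}=\mathrm{Attn}(\boldsymbol{Z})$ is, for frozen scores, a linear map $\tilde Z_{hk}=\sum_{h'k'}A_{(hk),(h'k')}Z_{h'k'}$, so with $\psi^l_{hk}:=\sum_{h'k'}A_{(hk),(h'k')}\psi_{h'k'}$ (still supported on $\mathcal{D}_o^l$) we get $\tilde Z_{hk}=\int_{\mathcal{D}_o^l}\psi^l_{hk}(\boldsymbol{\xi})\,\mathbf{Y}^{l-1}(\boldsymbol{\xi})\,d\boldsymbol{\xi}$. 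The PhCA-Decoder then evaluates, for any query $\mathbf{x}^*\in\mathcal{D}$, the slice-weighted read-out $\mathrm{PhLP}(\mathbf{Y}^{l-1})(\mathbf{x}^*)=\sum_{h,k}\phi^l_{hk}(\mathbf{x}^*)\,\tilde Z_{hk}$, where $\phi^l_{hk}(\mathbf{x}^*)$ is the decoder attention weight from $\mathbf{x}^*$ to token $(h,k)$ acting on head $h$'s channel block. Exchanging the finite sums with the integral yields
\[
\mathrm{PhLP}(\mathbf{Y}^{l-1})(\mathbf{x}^*)=\int_{\mathcal{D}_o^l}\Big(\sum_{h=1}^H\sum_{k=1}^L\phi^l_{hk}(\mathbf{x}^*)\,\psi^l_{hk}(\boldsymbol{\xi})\Big)\mathbf{Y}^{l-1}(\boldsymbol{\xi})\,d\boldsymbol{\xi},
\]
so $\kappa^l(\mathbf{x}^*,\boldsymbol{\xi})=\sum_{h,k}\phi^l_{hk}(\mathbf{x}^*)\psi^l_{hk}(\boldsymbol{\xi})$ has rank at most $HL$, which is exactly \eqref{eq:phlp-io}. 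Finally, the residual connection wrapping PhLP in the latent operator block adds $\mathbf{Y}^{l-1}(\mathbf{x}^*)$ on observed queries $\mathbf{x}^*\in\mathcal{D}_o^l$; absorbing this as a distributional diagonal $\delta(\mathbf{x}^*-\boldsymbol{\xi})\,\mathbf{1}_{\mathcal{D}_o^l}(\mathbf{x}^*)$ (equivalently, via the self-token already present in the PhCA construction) produces the claimed identity-plus-low-rank kernel, which vanishes for $\mathbf{x}^*\notin\mathcal{D}_o^l$.

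The main obstacle is the discrete-to-continuous identification: one must fix the quadrature measure so that the $\ell^1$-normalized, $\epsilon$-regularized sums converge to $\int_{\mathcal{D}_o^l}\psi_{hk}\,(\cdot)\,d\boldsymbol{\xi}$ with the normalization cleanly absorbed into $\psi_{hk}$, and check that PConv perturbs only the weight/mask tensors without breaking the factored rank-$HL$ form or enlarging the integration domain. The multi-head bookkeeping and the attention mixing in the middle step are then routine linear algebra; the one subtlety to track is that the encoder reads $\boldsymbol{S}$ while the decoder reads $\boldsymbol{S}^{next}$, so $\phi^l_{hk}$ and $\psi^l_{hk}$ are built from different but deterministically related tensors, and this correspondence must be kept explicit to make the rank bound exact.
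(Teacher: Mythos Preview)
Your proposal is correct and follows essentially the same three-step skeleton as the paper's proof sketch: encoder attention furnishes the $\psi^l_{hk}$ factors, the decoder supplies the $\phi^l_{hk}$ factors via the attention contraction, and the token mixer is absorbed as a linear reparameterization that preserves the rank-$HL$ factored form. Your treatment is in fact more explicit than the paper's sketch on the quadrature identification, the $\ell^1$/$\epsilon$ normalization, the PConv bookkeeping (noting that $\phi$ reads $\boldsymbol{S}^{next}$ while $\psi$ reads $\boldsymbol{S}$), and the residual-as-Dirac interpretation, so no gap needs to be filled.
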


\begin{proof}[Proof Sketch]
The proof proceeds through three key steps: (i) PhCA encoder produces attention weights $\mathbf{S} \in \mathbb{R}^{H \times N_{o} \times L}$ that define $\psi^l_{hk}(\boldsymbol{\xi}) := \mathbf{S}_{hk}(\boldsymbol{\xi})$; (ii) Decoder parameters provide $\phi^l_{hk}(\mathbf{x}^*)$, realizing the low-rank kernel through attention contraction $\mathbf{S}^T \mathbf{Y}$; (iii) The token mixer enriches the kernel through learnable token transformations while maintaining the operator form. See Appendix~B for complete proof.
\end{proof}

\paragraph{Variant: LANO-S.} 
We investigate a variant (LANO-S) that uses explicit positional encodings to recalculate attention maps during decoding, following LNO's approach and providing a direct comparison between implicit and explicit spatial representations.

\section{Experiment}
\subsection{POBench-PDE}
POBench-PDE is a benchmark suite for PDE-governed tasks under partial observation. 
It contains six settings across three representative PDE tasks. 
~POBench-PDE is specifically constructed to investigate the central question: \textit{can masked prediction tasks within observed regions implicitly guide operator networks to accurately infer unobserved regions?}

\paragraph{Benchmark Construction.}
POBench-PDE comprises benchmark PDE solvers and real-world applications, with all datasets reformulated for partially observed trajectories using temporally consistent masks.
POBench-PDE includes:
\begin{itemize*}
    \item \textbf{Navier–Stokes.} 2D turbulent flow~\cite{li2020fourier}.
    \item \textbf{Diffusion-Reaction.} Biological pattern formation~\cite{takamoto2022pdebench}.
    \item \textbf{ERA5.} Real-world climate data~\cite{hersbach2020era5}.
\end{itemize*}

We exclude tasks like Elasticity that require globally complete inputs, as they do not support meaningful partial observation training. See Appendix~C for more details.

\begin{figure*}[!t]
    \centerline{\includegraphics[width=0.98\linewidth]{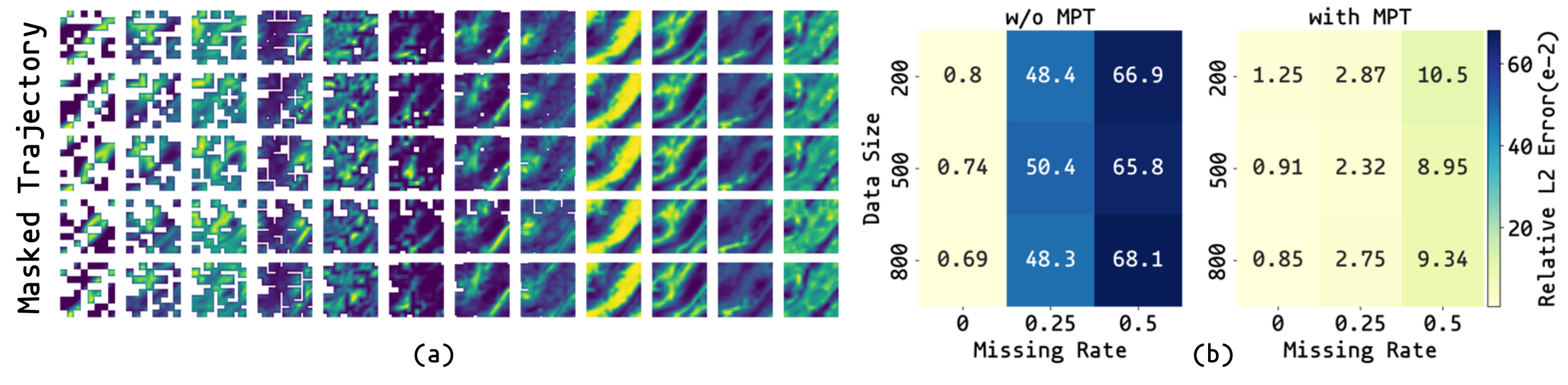}}
    \caption{(a) Visualization of boundary-first latent propagation in PhLP. Each row shows different masked trajectory variants of the same historical sequence. From left to right: features evolve from shallow to deep layers, demonstrating progressive reconstruction from partial observations to coherent physical representations. See Appendix~D.2 for more visualizations. (b) Effectiveness of Mask-to-Predict Training strategy~(MPT) on Diffusion-Reaction dataset. Heatmaps show relative L2 error across different data sizes and missing rates under patch-wise missingness. Left: without MPT, Right: with MPT. Lower is better.}
    \label{fig:visualization}
\end{figure*}

\paragraph{Task.}
To simulate realistic scenarios of partial observation, POBench-PDE supports controlled masking strategies during training and evaluation:
\begin{itemize*}
    \item \textbf{Generalization under Varying Missing Rates.} We simulate varying levels of observational sparsity during training, while keeping the inference stage fixed to full-domain. Specifically, we subsample training data on subsets $\mathcal{D}_o \subset \mathcal{D}$ at different sampling ratios: $s \in\{5 \%, 25 \%, 50 \%\}$.
    \item \textbf{Generalization to Missing Patterns.} We evaluate model robustness under two typical missing patterns: (i) point-wise missingness: Independent Bernoulli sampling of observation points. (ii) patch-wise missingness: Structured masking of contiguous spatial blocks to simulate real-world occlusion or sensor failure. 
\end{itemize*}

\paragraph{Evaluation.}
All evaluations are performed under partially observed input conditions. However, we compute relative \( \ell_2 \) error over the entire ground truth output, rather than restricting it to observed regions. This stands in contrast to masked loss settings and highlights the model’s extrapolative generalization across unobserved regions.


\subsection{Experimental setting}
\paragraph{Baselines.} We benchmark against representative neural operators on POBench-PDE:
\begin{itemize*}
  \item \textbf{MIONet}~\citep{jin2022mionet}: Multi-input DeepONet with autoregressive architecture for interpolation-free predictions.
  \item \textbf{OFormer}~\citep{su2024roformer}: Transformer-based operator with flexible input/output handling.
  \item \textbf{CORAL}~\citep{serrano2023operator}: Neural radiance field-inspired mesh-free PDE solver.
  \item \textbf{GNOT}~\citep{hao2023gnot}: Neural operator utilizing linear Transformers.
  \item \textbf{IPOT}~\citep{lee2024inducing}: Attention-based operator with compressed latent space.
  \item \textbf{LNO}~\citep{wang2024latent}: SOTA neural operator (2024).
\end{itemize*}
These models decode solution fields in a single step but often yield blurry and incoherent results due to challenges in capturing spatial dependencies. In contrast, our boundary-first autoregressive framework progressively incorporates spatial structure, improving prediction accuracy.

\paragraph{Implementation.}
For fair comparison, all models are trained by the AdamW optimizer~\citep{loshchilov2017decoupled} with an initial learning rate of $10^{-3}$, scheduled by OneCycleLR~\citep{smith2018disciplined}. All baseline models follow their official or widely adopted default configurations. Following standard practice, our model employs 8 layers unless otherwise stated. Notably, all models are trained with the MPT strategy, without which none can converge under partial observation. On the two synthetic datasets, we adopt a default patch-wise missingness with a patch size of 4, while for ERA5, we use a larger patch size of 6.
See Appendix~D for comprehensive descriptions of implementations.

\begin{table}[t]
	\centering
	\setlength{\tabcolsep}{1pt} 
        \renewcommand{\arraystretch}{0.9} 
        \footnotesize 
        \sc{
            \begin{tabular}{l|c|c|c|c}
                \toprule
                Source & Benchmarks & \#Time & \#Dim & \#Mesh \\
                \midrule
            FNO & Naview-Stokes & 20 & 2D+Time & 4096 \\
                  \midrule
                  PDE Bench & Diffusion Reaction & 20 & 2D+Time & 4096 \\	     
                \midrule
                  ECMWF & ERA5 & 14 & 2D+Time & 16200 \\
                \bottomrule
            \end{tabular}
        }
        \caption{Summary of POBench-PDE.}
        \label{tab:dataset_summary}
\end{table}
\begin{table}[t]
    \centering
    \setlength{\tabcolsep}{5pt} 
    \renewcommand{\arraystretch}{0.8} 
    \footnotesize 
    \sc{
        \begin{tabular}{c|c|cc|cc}
            \toprule
            \multicolumn{2}{c|}{\multirow{2}{*}{Ablations}} & \#Mem & \#Time & \multicolumn{2}{c}{Relative L2 $\downarrow$} \\
            \multicolumn{2}{c|}{} & (MB) & (s/epoch) & NS & DR \\
            \midrule
            \multirow{5}{*}{\makecell{\#Feats}} 
                & 1  & 42.35  & 261.17 & 0.5254 & 0.4613 \\
                & 8  & 47.23  & 279.00 & 0.2244 & 0.0071 \\
                & 16 & 57.76  & 281.64 & 0.1516 & 0.0073 \\
                & 32 & 95.40  & 282.03 & 0.1274 & 0.0092 \\
                & 64 & 235.38 & 424.74 & 0.1198 & 0.0603 \\
            \midrule
            \multirow{3}{*}{w/o} 
                & BF  & 54.47 & 242.94 & 0.1963 & 0.4649 \\
                & TM  & 95.18 & 248.34 & 0.1459 & 0.0086 \\
                & MPT & 95.40 & 244.50 & 0.4958 & 0.4722 \\
            \midrule
            \multirow{2}{*}{\makecell{Token \\ Mixer}}
                & MLP  & 95.73 & 255.16 & 0.1341 & 0.0089 \\
                & Attn & 95.40 & 282.03 & 0.1274 & 0.0092 \\
            \bottomrule
        \end{tabular} 
        \caption{Ablations on latent features (\#Feats), core components (Boundary-First BF, Token Mixer TM, Mask-to-Predict Training MPT), and token mixer designs. We conduct experiments varying feature numbers, removing components (w/o), and replacing token mixers under a point-wise missingness with 25$\%$ missing rate. Memory usage is calculated with batch size 1. NS: Navier-Stokes, DR: Diffusion-Reaction.}
        \label{table:ablations}
    }
\end{table}

\subsection{Main Results}
\paragraph{Benchmark PDE Solvers.}
As shown in Table~\ref{table:main_result}, \ours achieves near state-of-the-art performance across both benchmarks with all partial observation scenarios. Under patch-wise missingness with less than $50\%$ missing rate, our model consistently outperforms the second-best baseline, with relative error reductions exceeding $17.8\%$.

The reuse-based variant (\ours, default) significantly outperforms the recalculated one (\ours-S), especially on Diffusion-Reaction with high sparsity and structurally disjoint missing regions. We hypothesize that this is due to the instability of decoder-side attention recalculation, which lacks strong guidance and yields incoherent propagation. In contrast, encoder-generated attention—conditioned on aggregated observations—appears to provide stable priors that better guide decoding.

\paragraph{Real-World Applications.}
As shown in Table~\ref{table:main_result}, \ours also achieves superior accuracy compared to the baselines. 
In practical applications such as climate modeling and remote sensing, where missing or partial observations are the norm rather than the exception, the ability of \ours to generalize from incomplete data is crucial.


\begin{figure}[!t]
    \centerline{\includegraphics[width=0.95\linewidth]
    {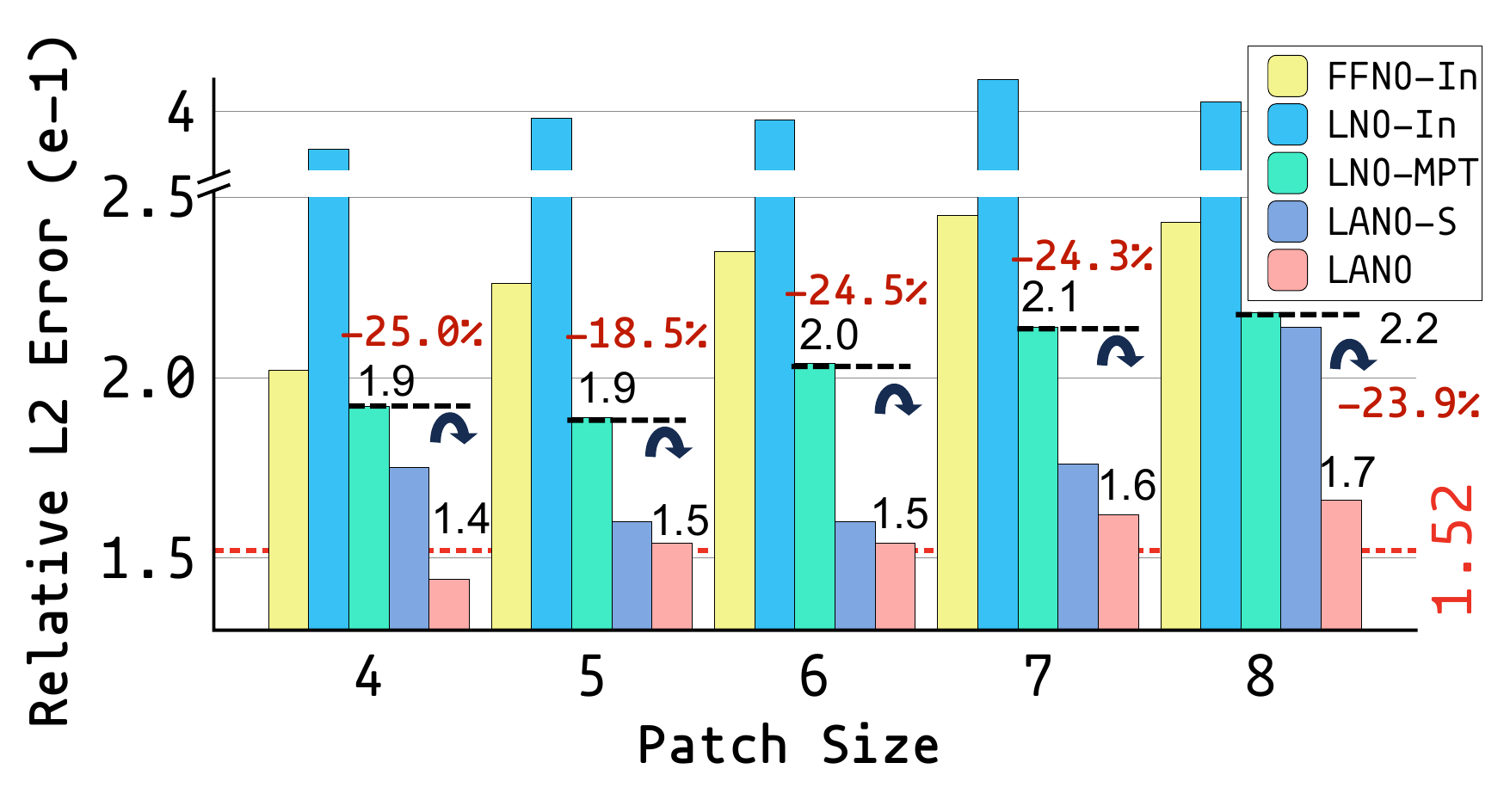}}
    \caption{Performance under varying patch sizes on Navier-Stokes (25\% patch-wise missingness). \ours achieves consistent improvements of 18.5\%-25.0\% over LNO~\citeyearpar{wang2024latent}. Remarkably, our experiments demonstrate that even under 25\% patch-wise missingness, \ours achieves performance comparable to or exceeding that of FNO~\citeyearpar{li2020fourier}, one of the foundational works in neural operator learning.}
    \label{fig:bigPO}
\end{figure}
\subsection{Model Analysis}
\paragraph{Ablations.} We conduct comprehensive ablations on \ours covering feature scaling (\#Feats), component removal (w/o), and token mixer replacement. From Table~\ref{table:ablations}:
\begin{itemize*}
    \item \textbf{Feature scaling:} Optimal at 64 features for NS (77.2\% improvement: 0.5254 $\rightarrow$ 0.1198) and 8 for DR, with larger sizes causing overfitting in simpler PDEs.
    \item \textbf{Component removal:} All components are essential. Removing the boundary-first framework (BF) or mask-to-predict training (MPT) severely degrades performance (289.2$\%$ drop for MPT in NS), while token mixer removal affects complex PDEs more than simple ones.
    \item \textbf{Token mixer replacement:} Attention outperforms MLP for NS (5.0\% improvement), while MLP suffices for simpler DR, indicating task-dependent optimal complexity.
\end{itemize*}

\paragraph{Effectiveness of Mask-to-Predict Training.} We evaluate the effectiveness of MPT on Diffusion-Reaction across varying data sizes (200--800 samples) and missing rates (0$\%$--50$\%$) in a patch-wise missingness setting. 
Figure~\ref{fig:visualization}(b) reveals key observations: 
\begin{itemize*}
    \item \textbf{Significant error reduction:} MPT consistently reduces errors across all configurations, achieving a 84.3$\%$ improvement (0.6688 $\rightarrow$ 0.1051) at a 50$\%$ missing rate with 200 samples. 
    \item \textbf{Robustness to data scarcity:} Without MPT, errors exceed 0.48 under high missing rates. MPT maintains stable performance below 0.11 across most configurations. 
    \item \textbf{Enhanced generalization:} Results confirm that MPT effectively addresses supervision gaps in unobserved regions, enabling robust performance under partial observation.
\end{itemize*}

\paragraph{Visualization of Physics-Aware Latent Propagation.} 
To understand PhLP's processing of partially observed inputs, we visualize intermediate features across layers for the same trajectory under different missing locations. Figure~\ref{fig:visualization}(a) shows progressive evolution from shallow (left) to deep layers (right), revealing key insights: 
\begin{itemize*}
    \item \textbf{Progressive refinement:} Features evolve from sparse fragments to coherent physical structures, confirming boundary-first propagation.
    \item \textbf{Physically consistent reconstruction:} Regardless of missing locations, all input variants converge to similar latent features capturing coherent flow structures, demonstrating PhLP's robustness to partial observation and ability to infer physically meaningful states.
\end{itemize*}

\paragraph{Performance under Varying Patch Sizes.} We evaluate model robustness on Navier-Stokes under patch-wise missingness (25\% missing rate) across patch sizes 4--8. Figure~\ref{fig:bigPO} shows \ours consistently achieves the lowest relative error across all configurations, with 18.5\%--25.0\% improvements over the LNO~\cite{wang2024latent} baseline. While most baselines degrade with larger patches, \ours maintains stable performance (1.44--1.66 range), demonstrating effective handling of contiguous missing regions. For fair comparison, all models are configured with 12 layers when evaluated under patch sizes 5--8.
FFNO~\cite{tran2023factorized}, a scalable variant of FNO, and LNO-Interp use cubic interpolation for data completion before standard training.
MPT-trained models consistently outperform these interpolation-based variants.


\begin{figure}[!t]
    \centerline{\includegraphics[width=\linewidth]{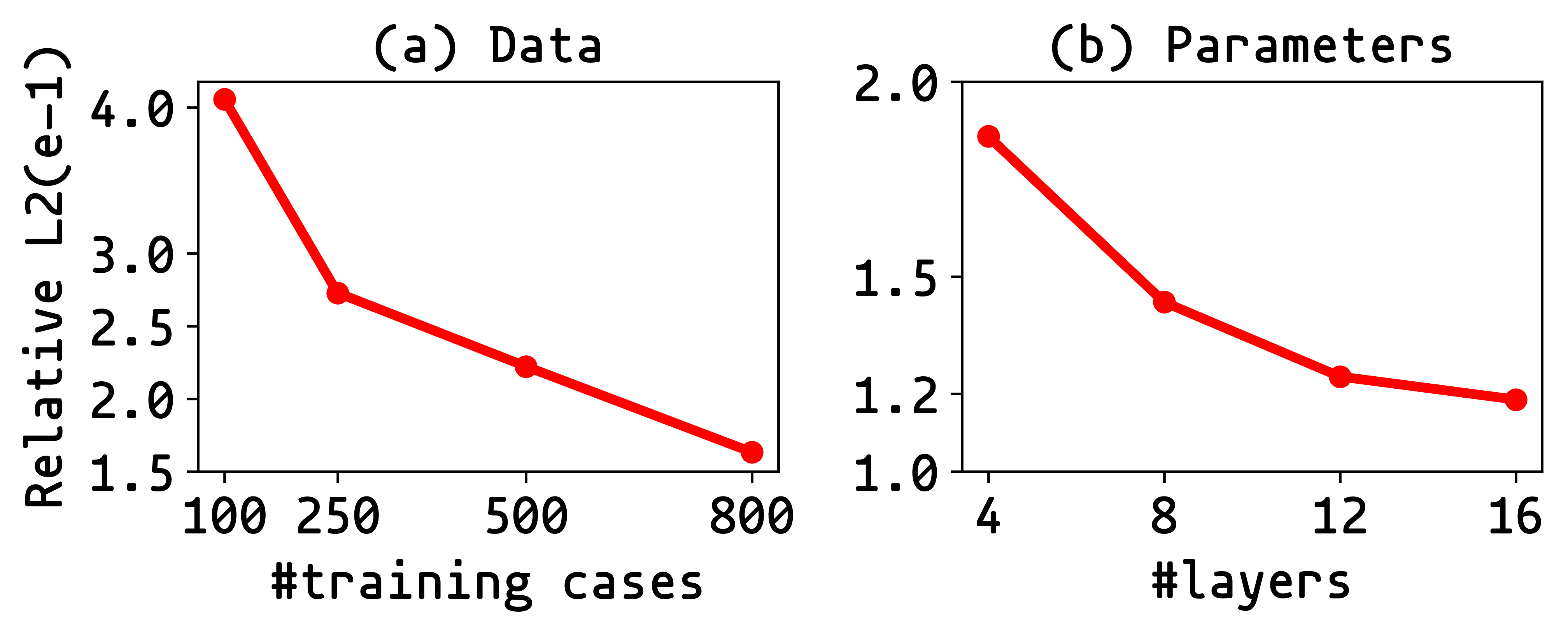}}
    \caption{Evaluation of the model scalability in terms of data size and parameter size. Default setting: 1000 cases, 8 layers.}
    \label{fig:scalability}
\end{figure}
\paragraph{Scalability.} 
We analyse \ours scalability by varying training data volume and model depth on the Navier-Stokes dataset under patch-wise missingness with 25\% missing rate. The results in Figure~\ref{fig:scalability} show consistent performance gains with larger datasets and models, demonstrating its viability as a foundation model backbone for PDE solving.


\section{Conclusion}
In this work, we explore neural operator learning from partial observation, addressing a critical challenge in PDE solver deployment. Through systematic experiments on POBench-PDE, we identify two key insights: (i) Mask-to-predict training enables robust learning by creating pseudo-missing regions, and (ii) Boundary-first autoregressive framework is crucial for progressive reconstruction while maintaining physical consistency. We propose LANO with physics-aware latent propagation~(PhLP), achieving up to 68\% error reduction over existing methods. Our approach demonstrates superior performance on both synthetic benchmarks and real-world climate data.
While our work establishes a foundation for practical PDE solving in data-scarce environments, challenges remain for broader deployment. 
Future work should address these limitations by developing adaptive missing pattern generation strategies and extending the framework to handle irregular geometries and higher-dimensional PDEs.


\section*{Acknowledgments}
This work was supported by the National Key Research and Development Program of China (2024YFE0202900); 
the National Natural Science Foundation of China under Grants 62436001, 62406019, 62536001, and 62176020; 
the Beijing Natural Science Foundation (4244096); 
the Joint Foundation of the Ministry of Education for Innovation Team (8091B042235); 
the State Key Laboratory of Rail Traffic Control and Safety (RCS2023K006); 
and the Talent Fund of Beijing Jiaotong University (2024XKRC075).

    \bibliography{main}

\end{document}